\documentclass{article}

\usepackage{caption}
\usepackage{amsthm}
\usepackage{multirow}

\newtheorem{theorem}{Theorem}
\newtheorem{assumption}{Assumption}

\usepackage{subcaption}
\usepackage{graphicx}

\usepackage{amssymb}
\usepackage[mathscr]{euscript}

\usepackage{natbib}

\usepackage{amsfonts,amsmath,amssymb}
\usepackage{bbm}
\usepackage{xspace}

\usepackage{multicol}
\usepackage{enumitem}

\usepackage{longtable}
\usepackage{booktabs}
\usepackage{siunitx}

\usepackage{scrextend}
\usepackage{microtype}
\usepackage{graphicx}
\usepackage{subcaption}
\usepackage{booktabs} 

\usepackage{hyperref}

\usepackage[preprint]{icml/icml2026}

\usepackage{amsmath}
\usepackage{amssymb}
\usepackage{mathtools}
\usepackage{amsthm}

\usepackage[capitalize,noabbrev]{cleveref}

\usepackage[textsize=tiny]{todonotes}
\usepackage{subcaption}

\usepackage{enumitem}
\usepackage{xcolor}
\usepackage{xparse}
\usepackage{tabularx,varwidth}

\icmltitlerunning{Query-efficient model evaluation using cached responses}

\begin{document}

\twocolumn[
\icmltitle{Query-efficient model evaluation using cached responses}



\icmlsetsymbol{equal}{*}

\begin{icmlauthorlist}
\icmlauthor{Hayden Helm}{helivan}
\icmlauthor{Ben Johnson}{jataware}
\icmlauthor{Carey E. Priebe}{jhu}
\end{icmlauthorlist}

\icmlaffiliation{jataware}{Jataware}
\icmlaffiliation{helivan}{Helivan}
\icmlaffiliation{jhu}{Johns Hopkins University; Query efficient benchmarking platform available at  
\url{https://github.com/helivan-research/quench}}

\icmlcorrespondingauthor{Hayden Helm}{hayden@helivan.io}

\icmlkeywords{Machine Learning, ICML}

\vskip 0.3in
]



\printAffiliationsAndNotice{}  


\begin{abstract}
Evaluating a new model on an existing benchmark is often necessary to understand its behavior before deployment.
For modern evaluation frameworks, generating and evaluating a response for all  queries can be prohibitively expensive.
In practice, responses from previously-evaluated models are often cached -- creating a potential opportunity to use this additional information to decrease the number of queries required to accurately evaluate a new model.
In this paper, we introduce an approach for predicting benchmark performance that leverages cached model responses based on the Data Kernel Perspective Space (DKPS), a method for quantifying the relationship between models in the black-box setting.
Theoretically, we show that DKPS-based methods are query-efficient under certain conditions.
Empirically, we demonstrate that DKPS-based methods achieve the same mean absolute error as baselines with a substantially decreased query budget.
We conclude by proposing an offline method for selecting a set of queries that maximizes the goodness-of-fit on reference models, improving prediction accuracy over random query selection.
\end{abstract}

\icmlkeywords{evaluation, black-box, query-efficiency, representation learning}

\begin{table*}[h]
\centering
\caption{Mean absolute error (MAE) of benchmark prediction methods under varying query budgets for four HELM-Lite tasks. Results averaged over 1024 random query subsets of size $m$ using all available reference models. Evaluation is under the Leave-One-Family-Out protocol. Lower is better; bold indicates lowest MAE per (task, $m$) pair. Ensemble methods yield the best or near-best performance across all settings.}
\small
\setlength{\tabcolsep}{2.25pt}
\begin{tabular}{l *{16}{S[table-format=1.3]}}
\toprule
& \multicolumn{4}{c}{\textbf{legalbench}} & \multicolumn{4}{c}{\textbf{medqa}} & \multicolumn{4}{c}{\textbf{wmt\_14}} & \multicolumn{4}{c}{\textbf{math}} \\
\cmidrule(lr){2-5} \cmidrule(lr){6-9} \cmidrule(lr){10-13} \cmidrule(lr){14-17}
\textit{Num. queries} & {1} & {4} & {16} & {64} & {1} & {4} & {16} & {64} & {1} & {4} & {16} & {64} & {1} & {4} & {16} & {64} \\
\midrule
Population Mean & 0.093 & 0.093 & 0.093 & 0.093 & 0.131 & 0.131 & 0.131 & 0.131 & 0.049 & 0.049 & 0.049 & 0.049 & 0.234 & 0.234 & 0.234 & 0.234 \\
Sample Score & 0.462 & 0.196 & 0.096 & 0.048 & 0.429 & 0.177 & 0.087 & 0.043 & 0.144 & 0.075 & 0.035 & 0.020 & 0.353 & 0.167 & 0.079 & 0.039 \\
IRT$^\dagger$ & 0.399 & 0.171 & 0.080 & 0.039 & 0.384 & 0.155 & 0.073 & 0.036 & 0.337 & 0.175 & 0.133 & 0.128 & 0.311 & 0.139 & \textbf{0.065} & \textbf{0.031} \\
DKPS & 0.090 & \textbf{0.068} & 0.051 & 0.034 & \textbf{0.117} & 0.097 & 0.066 & 0.039 & \textbf{0.039} & \textbf{0.027} & 0.019 & 0.016 & 0.147 & 0.105 & 0.082 & 0.070 \\
Ens(DKPS) & 0.090 & \textbf{0.068} & 0.051 & 0.034 & \textbf{0.117} & 0.097 & 0.065 & 0.038 & \textbf{0.039} & \textbf{0.027} & \textbf{0.019} & 0.015 & 0.147 & 0.105 & 0.080 & 0.061 \\
Ens(DKPS+IRT) & \textbf{0.089} & 0.070 & \textbf{0.050} & \textbf{0.031} & \textbf{0.117} & \textbf{0.096} & \textbf{0.061} & \textbf{0.032} & \textbf{0.039} & \textbf{0.027} & \textbf{0.018} & \textbf{0.013} & \textbf{0.145} & \textbf{0.098} & 0.068 & 0.042 \\
\bottomrule
\end{tabular}
\label{tab:dkps}

\vspace{2pt}
{\footnotesize $^\dagger$IRT uses the 1-parameter logistic (Rasch) model \citep{polo2024tinybenchmarks}; see Appendix~\ref{app:irt} for details.}
\end{table*}

\section{Introduction}
Benchmarks for machine learning methods are widely used to measure the field’s progress.
While they are not without flaw \citep{pmlr-v97-recht19a}, the ubiquity and longevity of early benchmarks such as MNIST \citep{lecun2010mnist}, CIFAR-10 \citep{Krizhevsky09learningmultiple}, and the Iris dataset \citep{fisher36lda} attest to their practical utility.
In the era of modern generative models, benchmarks have shifted away from evaluating performance on a single, well-defined task and moved towards evaluating performance across a suite of tasks designed to capture a model’s overall capability. 
For example, \citep{liang2022holistic} introduced the HELM benchmark suite and leaderboard as an attempt to evaluate language models as comprehensively and transparently as possible. 
Similar multi-task benchmark suites and leaderboards now exist for embedding models \citep{muennighoff2023mtebmassivetextembedding}, image generation \citep{huang2025t2icompbenchenhancedcomprehensivebenchmark}, coding \citep{chen2021evaluatinglargelanguagemodels}, and ``intelligence" \citep{chollet2019measureintelligence}, among others.

Although multi-task evaluations offer a more complete picture of a model’s strengths and weaknesses, they are often computationally expensive. 
Evaluating a model on a modern multi-task benchmark typically requires generating and scoring thousands of responses, making full evaluation increasingly impractical as both model sizes and benchmark scopes grow. 
At the same time, it is easier than ever to produce new model behavior -- via parameter efficient fine-tuning \citep{han2024parameterefficientfinetuninglargemodels}, instruction tuning \citep{zhang2025instructiontuninglargelanguage}, model merging \citep{model-merging}, distillation \citep{Gou_2021}, prompt engineering \citep{sahoo2025systematicsurveypromptengineering}, etc. -- making it infeasible to evaluate each new variant.

These trends motivate the need for query-efficient alternatives to full benchmark evaluation. 
In this paper, we address this problem by leveraging information and responses from previously evaluated models under the assumption that it is possible to predict a new model’s score by leveraging the similarity between its responses and cached responses from already-scored models on a (small) subset of queries. 

\noindent \textbf{Contribution.} We make two primary contributions.
\begin{enumerate}[itemsep=0pt, topsep=0pt, parsep=2pt]
\item \textbf{Theoretical:} We prove that a benchmark prediction method based on the Data Kernel Perspective Space (DKPS) is query-efficient relative to estimating benchmark performance using a subset of queries.

\item \textbf{Empirical:} We validate theoretical query-efficiency of DKPS-based methods. 
As summarized in Table~\ref{tab:dkps}, DKPS-based methods can reduce the number of queries required to achieve a given performance by more than 10$\times$.
\end{enumerate}
\subsection{Background \& related work}

\paragraph{Low-dimensional representations of language models.}
Our work contributes to a growing literature on low-dimensional representations of language models. 
Some approaches construct representations by comparing internal activations across models \citep{duderstadt2023comparing, huh2024platonicrepresentationhypothesis, horwitz2025chartatlasworldsmodels} or by comparing model weights directly \citep{chen2025extracting}. 
However, evaluation frameworks typically do not require (or even allow) access to model internals, and instead rely only on model responses.
The Data Kernel Perspective Space (DKPS), first introduced in the context of monitoring multi-agent systems \citep{helm2024tracking}, addresses this setting by mapping a collection of generative models into a low-dimensional Euclidean space via multidimensional scaling of matrices containing average embedded responses. 
Prior theoretical work on the DKPS established consistency and concentration of the representations \citep{aranyak, acharyya2025concentrationboundsresponsebasedvector}
and of supervised inference thereon \citep{helm-etal-2025-statistical}.
These works do not
address relative performance for fixed query budgets. 
Our main contributions are a query-efficiency guarantee showing DKPS-based regression outperforms subset
scoring given sufficiently many reference models, and an empirical validation of this property for benchmark score prediction.

\paragraph{Efficient benchmarking.}

Several lines of work aim to reduce the computational cost of benchmark evaluation.
\citet{polo2024tinybenchmarks} use Item Response Theory to construct benchmark subsets of 100 examples (approx. 1\% of original size) that predict performance within 2\% error.
\citet{vivek-etal-2024-anchor} propose clustering examples based on cross-model predictions and selecting cluster centers as "anchor points," achieving accurate rankings with significantly fewer examples.
\citet{bean2025scales} propose item-centric selection based on cognitive embeddings, improving cold-start performance and cross-family transferability.
\citet{li2024active} use reinforcement learning to model dependencies across examples, reducing estimation error by 25-50\% via active selection.
\citet{perlitz2024efficient} analyze benchmark design choices and propose ranking algorithms achieving 100× cost reductions with minimal ``reliability" loss.
These techniques typically require explicit or implicit task-specific structure, model metadata, access to model internals, or an response-level scoring functions.
Relatedly, \citet{ilyas2022datamodels} study \emph{datamodels}, which predict a
model's output on a particular input as a function of training data composition.
Our work targets a related problem: predicting a model's aggregate score
across a collection of inputs using embedded responses from previously evaluated
models.

Our approach is complementary to prior work in that we leverage cached responses from previously evaluated models, operate purely via black-box access and model response similarity, and assume no structure across tasks or subtasks. 
Importantly, this enables combining our method with existing techniques -- e.g., using DKPS-based methods to predict performance on IRT-selected subsets \citep{polo2024tinybenchmarks} or on  ``anchor points" \citep{vivek-etal-2024-anchor} -- to potentially compound efficiency gains beyond what either approach achieves alone.

\subsection{Problem statement}
Given a generative model $ f $, we consider the problem of predicting its score on a benchmark $ Q^{*} = \{q_{1}, \hdots, q_{M}\} $.
Let $ y: \mathcal{F} \times 2^{Q^{*}} \to [0,1] $ denote the benchmark scoring function.
That is, $ y(f, Q^{*}) $ is the score assigned to $ f $ after evaluating it on all queries.
We refer to $ y(f, Q^{*})$ as $ y $ when the context is clear.

As described above, full evaluation of $ f $ on all queries may be infeasible. 
Instead, we assume access to previously-evaluated (model, score) pairs $ (f_{1}, y_{1}), \hdots (f_{n}, y_{n}) $ and each model's response to each query $ \{\{f_{i}(q_{j})\}_{j=1}^{M}\}_{i=1}^{n} $.
Given this additional information, our goal is to estimate $ y $ with $ m \ll M $ queries.
That is, we seek to identify an estimate $ \hat{y} $ that predicts the full benchmark score of $
f $ from its responses to a subset $ Q \in 2{^{Q^*}} $ of the benchmark queries and the information available from the previously-evaluated models.
\label{introduction}

\section{The Data Kernel Perspective Space}
For our purposes, a model $ f \in \mathcal{F} $ is a random mapping from a query space $ \mathcal{Q} $ to a response space $ \mathcal{X} $. 
Given $ q \in \mathcal{Q} $, model responses $ f(q)_{1}, \hdots, f(q)_{r} $ are sampled $ i.i.d. $ from the distribution $ F $.
We let $ g: \mathcal{X} \to \mathbb{R}^{p} $ be a fixed embedding function that maps a response to a real-valued vector.

Given models $ f_{1}, \hdots, f_{n} $ and queries $ Q = \{q_{1}, \hdots, q_{m}\} $,  we let $ \bar{X}_{i} \in \mathbb{R}^{m \times p} $ be the matrix whose $ j $th row is the average embedded response from $ f_{i} $ to query $ q_{j} $; or,  $\bar{X}_{ij\cdot} = \frac{1}{r} \sum_{k=1}^{r} g(f_{i}(q_{j}))_{k} $.
Further, we let $ D $ be the pairwise distance matrix with entries $ D_{ii'} = || \bar{X}_{i} - \bar{X}_{i'} ||_{F} $.
We refer to the distribution on embedded responses induced by $ f_{i}(q_{j}) $ as $ F_{ij} $.

Following \citep{aranyak}, the $d$-dimensional Data Kernel Perspective Space (DKPS) representations of the models are defined as the vectors $ (\widehat{\psi}_{1}, \hdots, \widehat{\psi}_{n}) $ that are a solution to
\begin{equation}
(\widehat{\psi}_{1}, \hdots, \widehat{\psi}_{n}) =    \text{argmin}_{z_i \in \mathbb{R}^d} 
    \sum_{i,i'}^{n}
    (
\lvert\lvert
z_i - z_{j} \rvert\rvert
 -
D_{ii'}
    )^2.
\label{eq:dkps}
\end{equation}
DKPS enables treating model-level analysis as a problem in a low-dimensional Euclidean space.
We note that the solution to Eq. \eqref{eq:dkps} -- and the quality of the representations for a given task -- may depend on the choice of query set $ Q $ and embedding function $ g $.
When the context is not clear, we emphasize the dependence on $ Q $ by referring to $ \psi $ as $ \psi(Q) $.
In the benchmark setting, the choice of $ Q $ is reasonably constrained to elements of $ 2^{Q^{*}} $.
Finally, we let $ \widehat{\Psi}_{Q}: \mathcal{F} \to \mathbb{R}^{d} $ be a mapping from the model space to the estimated perspective space under $ Q $; that is $ \widehat{\Psi}_{Q}(f_{i}) = \widehat{\psi}_{i}(Q) $.

Figure \ref{fig:dkps-example} shows the $ d = 2 $ DKPS of models induced by various choices of $ n $ and $ m $. 
Each dot is a model colored by score on the counting and probability subtask from HELM-Lite.
\begin{figure}
    \centering
    \includegraphics[width=0.9\linewidth]{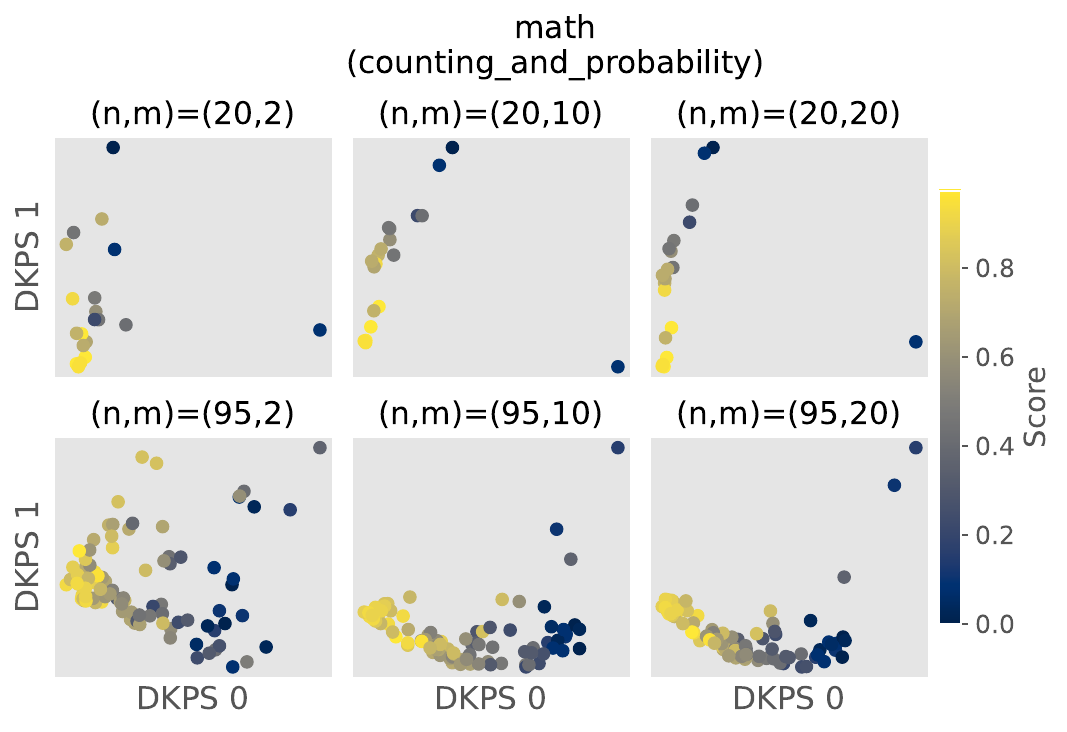}
    \caption{Example $ d = 2 $-dimensional Data Kernel Perspective Spaces (DKPS) for models publicly evaluated on HELM-Lite's MATH counting and probability subtask. 
    Each panel includes the DKPS representations for different $(n,m)$ =
     (number of models, number of queries) pairs induced by a random query set of size $ m $.
    Each dot is a model colored by its score on the subtask.
    As the number of queries increases (left to right), the models with similar scores are more tightly clustered.
    As the number of models increases (top to bottom), there are more models to help localize score signal.}
\label{fig:dkps-example}
\end{figure}

\subsection{Theoretical properties of the DKPS}
For a fixed collection of models, query set, and number of responses, the $ d $-dimensional DKPS representations of the models -- $ \widehat{\psi}_{1}, \hdots, \widehat{\psi}_{n} $ --  are estimates of a set of ``true" $ d $-dimensional vectors $ \psi_{1}, \hdots, \psi_{n} $.
That is, as $ n, m, $ and $ r $ tend to infinity at particular relative rates, $ \lim \widehat{\psi}_{i} \to \psi_{i} $ \citep{aranyak}.
Further, the error of the worst estimate is bounded above by a constant dependent on the number of models, the number of queries, the number of replicates, and the DKPS dimension: $ \max_{i}|| \widehat{\psi}_{i} - \psi_{i}||_{2} \le c(n,m,r,d)$ with high probability \citep{acharyya2025concentrationboundsresponsebasedvector}.
Given its importance to provable query-efficiency, we state this last result in its entirety here:
\begin{theorem}
\label{thm:concentration}
  [\citet{acharyya2025concentrationboundsresponsebasedvector} Theorem 2] In our setting, suppose $ r = \omega(n^{3}) $ and $ \sup_{ij} \;\text{trace}\left(Cov\left(F_{ij}\right)\right) = O(1)  $. 
  Then, under technical assumptions, for every $ \delta \in (0, 1/2) $ and for sufficiently large $ n $ and $ r $  \begin{align*}
      \max_{i} || \widehat{\psi}_{i}  - \psi_{i} ||_{2} \le \text{Poly}_{3}\left(\left(\frac{n^{3}}{r}\right)^{\frac{1}{2} - \delta}\right)
  \end{align*}
  with high probability, where $ \text{Poly}_{3} \left(x\right) $ is a third degree polynomial in $ x $.
\end{theorem}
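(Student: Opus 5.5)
The plan follows the standard perturbation route for classical multidimensional scaling: control the error in the estimated distance matrix, transfer that error to the doubly-centered Gram matrix, and then to its top-$d$ eigenvectors, losing only an orthogonal alignment. Throughout, $\psi_i$ is understood as the population DKPS point, that is, the MDS embedding of the noiseless distance matrix $\Delta$ with $\Delta_{ii'} = \|\mu_i - \mu_{i'}\|_F$, where $\mu_i \in \mathbb{R}^{m\times p}$ has rows $\mathbb{E}[F_{ij}]$. The estimate $\widehat{\psi}_i$ is compared to $\psi_i$ only up to an orthogonal transformation. I also take the "technical assumptions" to include an eigengap: the population Gram matrix has $d$ nonzero eigenvalues bounded away from zero at a suitable rate in $n$.

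\textbf{Step 1: entrywise control of $D - \Delta$.} Each $\bar{X}_{i j\cdot}$ is an average of $r$ i.i.d. vectors with covariance trace $O(1)$. Hence $\mathbb{E}\|\bar{X}_i - \mu_i\|_F^2 = O(m/r)$. By the triangle inequality, $|D_{ii'} - \Delta_{ii'}| \le \|\bar{X}_i-\mu_i\|_F + \|\bar{X}_{i'}-\mu_{i'}\|_F$. A union bound over the $n$ models, using Chebyshev, or a sub-Gaussian or Bernstein bound if the technical assumptions allow it, gives $\max_{i,i'}|D_{ii'}-\Delta_{ii'}| \lesssim (n/r)^{1/2-\delta}$ with high probability. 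The $\delta$ slack absorbs the logarithmic or polynomial-moment union-bound losses. Working with the normalization implicit in the $O(1)$ trace assumption, I treat $m$ as folded into the constants.

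\textbf{Step 2: the Gram perturbation.} Write $B = -\tfrac12 J D^{\circ 2} J$ and $B_0 = -\tfrac12 J\Delta^{\circ 2}J$, with $J$ the centering matrix. Since $D^{\circ2}-\Delta^{\circ2} = (D-\Delta)\circ(D+\Delta)$, this difference is entrywise bounded by $\epsilon(2\|\Delta\|_{\max}+\epsilon)$, where $\epsilon$ is the Step 1 bound. Its operator norm is then at most $n$ times that. So $\|B - B_0\|_2 \lesssim n\epsilon + n\epsilon^2$, and the square term is the first source of the polynomial.

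\textbf{Step 3: eigenvector perturbation in two-to-infinity norm.} With population embedding $\Psi = U_0\Lambda_0^{1/2}$ and estimate $\widehat{\Psi} = U\Lambda^{1/2}$, I would use the standard decomposition
\[
U\Lambda^{1/2} - U_0\Lambda_0^{1/2}W = (B-B_0)U_0\Lambda_0^{-1/2}W + R.
\]
Here $W$ is the Procrustes rotation, and $R$ collects second- and third-order terms in $\|B-B_0\|/\lambda_d$. Bounding the leading term row-wise gives a contribution linear in $\epsilon$ times powers of $n$. The remainder terms, via Davis--Kahan and $\sin\Theta$ bounds together with the eigenvalue square-root perturbation, produce quadratic and cubic terms. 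Collecting everything, the maximal row error is a cubic polynomial in $\sqrt{n^3/r}$-type quantities: the $n^{1/2}$ from Step 1 combined with the factor $n$ from Step 2 yields $(n^3/r)^{1/2-\delta}$ after normalization by the eigengap. The rate condition $r=\omega(n^3)$ is exactly what makes this quantity vanish.

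\textbf{Main obstacle.} The hardest part is Step 3. It requires getting a max-row bound, not just a Frobenius bound, without losing an extra $\sqrt{n}$, while simultaneously handling the square-root of eigenvalues and the alignment $W$. Naive use of $\|\cdot\|_{2\to\infty}\le\|\cdot\|_2$ loses this factor. I would first accept that loss, since the target rate $n^3/r$ already appears to be generous, and check whether it still matches the statement. Only if it does not would I move to a leave-one-out or row-wise concentration argument for $(B-B_0)U_0$.
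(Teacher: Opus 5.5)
The paper gives no proof of this statement. It is imported verbatim from Theorem 2 of Acharyya et al.\ (2025), with the ``technical assumptions'' left unstated, so there is no in-paper argument to compare yours against. On its own terms, your plan is sound in outline: entrywise control of $D-\Delta$, transfer to $B=-\tfrac12 JD^{\circ 2}J$, then a Procrustes-aligned perturbation of the top-$d$ eigenpairs. This is the standard route for classical MDS with noisy dissimilarities. Three things need tightening for it to land on the displayed rate.

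\emph{Where to put the threshold.} Feeding your Step~1 level $(n/r)^{1/2-\delta}$ into Step~2 gives $\|B-B_0\|_2\lesssim n(n/r)^{1/2-\delta}=n^{2\delta}(n^3/r)^{1/2-\delta}$, which carries an extra factor $n^{2\delta}$. With order-one eigenvalues, this is not $\mathrm{Poly}_3$ of $(n^3/r)^{1/2-\delta}$ with constant coefficients. Even with the $1/\sqrt n$ gain available for order-$n$ eigenvalues, it is dominated by $(n^3/r)^{1/2-\delta}$ only when $\delta\le 1/4$.

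Instead, do the union bound at the level you want to finish at. Set $x=(n^3/r)^{1/2-\delta}$ and $C=\sup_{ij}\mathrm{tr}\,\mathrm{Cov}(F_{ij})$. Chebyshev gives $\max_i\|\bar X_i-\mu_i\|_F\le x/(2n)$ outside an event of probability at most $4mCn^3/(rx^2)=4mC(n^3/r)^{2\delta}$. This vanishes exactly because $r=\omega(n^3)$, for fixed $m$ as in the application. On the complement, $\|B-B_0\|_2\le\|\Delta\|_{\max}x+x^2/(2n)$. This also shows what $\delta$ is doing: under a pure second-moment hypothesis it is the price of Chebyshev plus a union bound, not of logarithmic factors. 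So no sub-Gaussian or Bernstein assumption is needed.

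\emph{What the spectral assumption must say.} Write $\lambda_k=\lambda_k(B_0)$. A lower bound on $\lambda_d$ alone does not give constant coefficients. Bounding your remainder $R$ via Davis--Kahan produces terms such as $\|\sin\Theta\|^2(\|\Lambda\|_2+\|\Lambda_0\|_2)/\sqrt{\lambda_d}\lesssim\lambda_1\|B-B_0\|_2^2/\lambda_d^{5/2}$. These come from replacing $U_0^\top U$ by its polar factor $W$ inside $W\Lambda^{1/2}-\Lambda_0^{1/2}W$. So you also need $\lambda_1/\lambda_d=O(1)$, plus exact rank $d$ for $B_0$ (otherwise you need a gap $\lambda_d-\lambda_{d+1}$ and must handle an extra tail term).

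With that in place, your ``main obstacle'' disappears. The crude step $\|\cdot\|_{2\to\infty}\le\|\cdot\|_2$ already yields row error $O(\|B-B_0\|_2+\|B-B_0\|_2^2)$ when the top $d$ eigenvalues are of order one. It yields $O(\|B-B_0\|_2/\sqrt n)$ when they are of order $n$, which is the typical case when $\Delta_{ii'}=\Theta(1)$ and the population perspectives are spread in all $d$ directions. No leave-one-out argument is needed for a rate this generous.

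\emph{Two reading issues.} You are right to compare only up to $W$. The displayed statement suppresses the alignment, but this is harmless downstream: the triangle-inequality chain in the proof of Theorem~\ref{thm:query-efficiency} goes through verbatim with $W\psi_i$ in place of $\psi_i$, as long as one $W$ serves all models. That holds here because the target is embedded jointly with the references.

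Note also that your argument concerns the CMDS embedding, which is what the experiments compute. Eq.~\eqref{eq:dkps}, however, defines $\widehat\psi$ as a raw-stress minimizer. The two coincide when $D$ is exactly a $d$-dimensional Euclidean distance matrix, but not in general. So Eq.~\eqref{eq:dkps} taken literally would need its own perturbation argument.
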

The concentration result of Theorem \ref{thm:concentration} is necessary to make claims related to the quality of representations for a given $ (n, m, r) $, which is critical for analyzing the query efficiency of DKPS-based methods for benchmark score prediction.

\subsection{Inference in the DKPS}
Following \citet{helm-etal-2025-statistical}, we consider model-level inference in DKPS as a statistical problem: Let $ (f, y), (f_{1}, y_{1}), \hdots, (f_{n}, y_{n}) $ be $ i.i.d. $ samples from a joint distribution on (model,  score) pairs. 
In the benchmark prediction setting, given $ f $ the true benchmark score $ y $ is deterministic and analysis with the marginal distribution on $ f $ is equivalent to analysis with the joint distribution. 
We refer to the distribution on models as $ P_{f} $.

Our goal is to construct a decision function $ h: \mathcal{F} \to \mathcal{Y} $  that minimizes the expected loss over $ P_{f} $. 
In practice, operating directly on the space of models is too complex.
Instead, we consider the proxy problem of model-level inference in DKPS: we observe perspective-score pairs $ (\widehat{\Psi}_{Q}(f), y), (\widehat{\Psi}_{Q}(f_{1}), y_{1}), \ldots, (\widehat{\Psi}_{Q}(f_{n}), y_{n}) $ and use these to construct a decision function $ h_{n}^{Q}: \mathcal{F} \to \mathcal{Y} $ defined by $ h_{n}^{Q}(f) = \hat{h}_{n}^{Q}(\widehat{\Psi}_{Q}(f)) $ where $ \hat{h}_{n}^{Q}: \mathbb{R}^{d} \to \mathcal{Y} $ is trained on $ \{(\widehat{\Psi}_{Q}(f_{i}), y_{i})\}_{i=1}^{n} $.
Formally, with loss function $ \ell: \mathcal{Y} \times \mathcal{Y} \to \mathbb{R} $, our goal is to select $ \hat{h}_{n}^{Q} $ to minimize
\begin{equation*}
\mathbb{E}_{P_{f}}[\ell(h_{n}^{Q}(f), y)] = \mathbb{E}_{P_{f}}[\ell(\hat{h}_{n}^{Q}(\widehat{\Psi}_{Q}(f)), y)].
\end{equation*}
We sometimes refer to $ h_{n}^{Q} $ as $ h_{n}^{(m)} $ as context requires.
Note that the responses from the target model $ f $ on are included when learning the mapping $ \widehat{\Psi}_{Q} $ but not when learning the decision function. 



\section{Provable query-efficiency in the DKPS}
\label{sec:theory}
In this section we define query-efficiency and prove that a simple regression function on the perspectives is query-efficient relative to estimating the model's score with its score on a subset of the benchmark queries.
Recall $ y := y(f, Q^{*}) $.

For a fixed $ Q \subseteq Q^{*} $ with $ |Q | = m$, we say the sequence of decision functions $ (h_{1}^{Q}, h_{2}^{Q}, \hdots) $ is \textit{$Q $ query-efficient} relative to $ (h_{1}'^{Q}, h_{2}'^{Q}, \hdots) $ if there exists $ N \in \mathbb{N} $ such that
\begin{equation}
\label{eq:query-efficiency}
\small{
\mathbb{E}_{P_f}\,
\Big[\ell\big(h^{Q}_{n}(f),\; y\big)\Big]
\le\;
\mathbb{E}_{P_{f}}\,
\Big[\ell\big(h'^{Q}_{n}(f),\; y\big)\Big]
}
\end{equation}
\normalsize
for all $ n > N $. 
We say the sequence of decision functions $ (h_{1}^{(m)}, \hdots, h_{n}^{(m)}) $ is \textit{m query-efficient} relative to $ (h_{1}'^{(m)}, \hdots, h_{n}'^{(m)}) $ if for all $ Q $ with $ |Q | = m $ there exists $ N_{Q} \in \mathbb{N} $ such that Eq. \eqref{eq:query-efficiency} holds for all $ n > N_{Q} $.

Finally, we say the sequence of decision functions $ (h_{1}^{(m)}, \hdots, h_{n}^{(m)}) $ is \textit{query-efficient} relative to $ (h_{1}'^{(m)}, \hdots, h_{n}'^{(m)}) $ if for all $ m < M $ there exists $ N^{(m)} \in \mathbb{N} $ such that it is $ m $-query efficient.

For our theoretical analysis we assume the benchmark score function is well-defined on all subsets of $ Q^{*} $.
For $ Q \subset \mathcal{Q} $, we let $ \hat{y}_{Q} := y(f, Q) $.

Let $ h_{n}^{(m)} $ be nearest neighbor regression in perspective space and $ \delta^{*} = \min_{i} || \widehat{\psi}_{i} - \widehat{\psi}||_{2} $:
\small{
\begin{align*}
\hat{y}_{NN} := h^{(m)}_{n}(\widehat{\psi})
= \frac{\sum_{i=1}^n \mathbf{1}\{i: ||\widehat{\psi}_{i} - \widehat{\psi}||_{2} = \delta^{*} \}y_{i}}{\sum_{i=1}^n \mathbf{1}\{i: ||\widehat{\psi}_{i} - \widehat{\psi}||_{2} = \delta^{*}\}}.
\end{align*}
}
\normalsize
To establish query-efficiency of nearest neighbor regression in DKPS, we require two key assumptions.
We let $\mathrm{MSE}(\hat{y}) = \mathbb{E}_{P_f}[(\hat{y} - y)^2]$ denote
the mean squared error of the estimator $\hat{y}$ over the model distribution
$P_f$.
\begin{assumption}[Lipschitz Score Function]
\label{ass:lipschitz-score}
Given $ Q \subseteq 2^{Q^*} $, the benchmark score function $y(\;\cdot\;, Q^{*}): \mathcal{F} \rightarrow \mathbb{R}$ is $\gamma$-Lipschitz on the perspective space induced by $ Q $; or, there exists $ \gamma > 0 $ such that for any $f, f' \in \mathcal{F}$,
$$|y(f, Q^{*}) - y(f', Q^{*})| \leq \gamma \cdot ||\psi(Q) - \psi'(Q)||_2.$$
\end{assumption}
The smoothness on the mapping from $ y $ to $ \psi(Q) $ ensures that nearby models in DKPS have similar benchmark scores.
In practice, when $ m = M $ Assumption \ref{ass:lipschitz-score} will hold if the embedding function $ g $ is sufficiently smooth with respect to $ y $.
With the same condition on $ g $,
Assumption \ref{ass:lipschitz-score} will hold in practice for $ m < M $ in a probabilistic sense.

\begin{assumption}[Model Distribution Support]
\label{ass:model-support}
The model distribution $P_f$ has non-zero measure on all compact subsets of $\mathcal{F}$. Equivalently, for any target model $f$ and radius $\delta > 0$, there exists $\epsilon > 0$ such that 
$$P_f(B_\delta(f)) \geq \epsilon$$
where $B_\delta(f) = \{f' \in \mathcal{F}: d_{\mathcal{F}}(f, f') < \delta\}$ for some appropriately defined $ d_{\mathcal{F}} $.
\end{assumption}

Specifically, $d_{\mathcal{F}}$ should be chosen such that
$d_{\mathcal{F}}(f, f') < \delta$ implies that $\Psi_Q(f)$ and $\Psi_Q(f')$
are close in the perspective space. A sufficient condition is that the embedding
map $\hat{\Psi}_Q$ is continuous with respect to $d_{\mathcal{F}}$.
In practice, Assumption \ref{ass:model-support} suggests that a large, diverse set of reference models may be needed to realize query efficiency for an arbitrary target model.

Given Assumptions \ref{ass:lipschitz-score} \& \ref{ass:model-support} we are able to arbitrarily bound the prediction error of nearest neighbor regression as a function of $(n, m, r)$.
We now state our main theoretical result:
\begin{theorem}
\label{thm:query-efficiency}
    For any $ \epsilon > 0 $ there exists $ (n,m,r) $ such that 
    $ MSE(\hat{y}_{NN}) \le \epsilon $
    with high probability.
    That is, for $ m < M $ such that $ MSE(\hat{y}_{Q}) > 0 $, $ \hat{y}_{NN} $ is query-efficient relative to $ \hat{y}_{Q} $ with high probability. 
\end{theorem}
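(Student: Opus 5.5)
The plan is to bound the nearest-neighbor error through the true perspectives, using a triangle-inequality decomposition. Let $i^*$ be any index attaining $\delta^*$. Since $\hat{y}_{NN}$ averages $y_i$ over the tied minimizers, it suffices to bound $|y_{i} - y|$ for each such minimizer. By Assumption~\ref{ass:lipschitz-score},
\begin{align*}
|y_{i^*} - y| &\le \gamma \|\psi_{i^*} - \psi\|_2 \\
&\le \gamma\left( \|\psi_{i^*} - \widehat{\psi}_{i^*}\|_2 + \|\widehat{\psi}_{i^*} - \widehat{\psi}\|_2 + \|\widehat{\psi} - \psi\|_2 \right).
\end{align*}
The first and third terms are estimation errors. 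The middle term equals $\delta^*$, and I will compare it with the distance to the closest reference model in true perspective space. If $j$ is the index minimizing $\|\psi_j - \psi\|_2$, then
\begin{align*}
\delta^* \le \|\widehat{\psi}_j - \widehat{\psi}\|_2 \le \|\psi_j - \psi\|_2 + 2\max_k \|\widehat{\psi}_k - \psi_k\|_2 .
\end{align*}
Combining the two displays gives $|y_{i^*} - y| \le \gamma\left(\min_j \|\psi_j - \psi\|_2 + 4\max_k\|\widehat{\psi}_k - \psi_k\|_2\right)$, where the maximum is taken over all $n+1$ models, including the target $f$, which is embedded jointly.

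Next I will control each term. For the estimation term, I apply Theorem~\ref{thm:concentration} to the $n+1$ models. Taking $r = \omega(n^3)$, for instance $r = n^{3}\log n$ or larger, makes $(n^3/r)^{1/2-\delta} \to 0$. This requires that $\text{Poly}_3$ has vanishing constant term, which I assume follows from the consistency result of \citet{aranyak}. Then $\max_k \|\widehat{\psi}_k - \psi_k\|_2 \le \eta$ with high probability, for any prescribed $\eta$, once $n$ and $r$ are large.

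For the approximation term $\min_j \|\psi_j - \psi\|_2$, I use Assumption~\ref{ass:model-support} with $d_{\mathcal{F}}$ chosen so that $d_{\mathcal{F}}(f,f')<\delta$ implies $\|\psi(f) - \psi(f')\|_2 < \eta$. The assumption gives $P_f(B_\delta(f)) \ge \epsilon_0$, so the probability that none of the $n$ i.i.d. references falls in the ball is at most $(1-\epsilon_0)^n \to 0$. Hence with high probability $\min_j\|\psi_j - \psi\|_2 < \eta$. Since scores lie in $[0,1]$, squared errors are at most $1$. The MSE is therefore bounded by $25\gamma^2\eta^2$ plus the failure probability, and choosing $\eta$ small and then $n$, $r$ large makes this at most $\epsilon$.

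The query-efficiency claim then follows by fixing $m<M$ and $Q$ with $\mathrm{MSE}(\hat{y}_Q) = c_Q > 0$. The baseline $\hat{y}_Q$ does not depend on $n$, so applying the first part with $\epsilon = c_Q$ gives $N_Q$ beyond which Eq.~\eqref{eq:query-efficiency} holds; note $m$ stays fixed while $n,r$ grow. Ranging over the finitely many $Q$ of size $m$, and over all $m < M$, yields query-efficiency.

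The main obstacle is that Assumption~\ref{ass:model-support}'s $\epsilon_0$ depends on the target $f$, so the covering bound is pointwise rather than uniform over $P_f$. I would first handle this by dominated convergence: for each $f$ the conditional error tends to $0$ and is bounded by $1$, so the expectation tends to $0$. I would also check that Theorem~\ref{thm:concentration}'s guarantees hold for the target $\widehat{\psi}$ included in the MDS. A secondary issue is that $\psi_i$ depends on $n$ through the joint MDS. I would treat $\psi$ as the population-level embedding and assume the Lipschitz constant $\gamma$ is uniform in $n$.
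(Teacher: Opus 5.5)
Your proposal is correct and follows essentially the same route as the paper's proof. Your Lipschitz-plus-triangle-inequality bound $|\hat{y}_{NN} - y| \le \gamma(\min_j \|\psi_j - \psi\|_2 + 4\max_k \|\widehat{\psi}_k - \psi_k\|_2)$ is the paper's $\gamma(4c + \epsilon')$ bound, with Theorem~\ref{thm:concentration} controlling $c$ and Assumption~\ref{ass:model-support} controlling $\epsilon'$. Your additions tighten steps the paper glosses over without changing the argument: you handle ties in $\hat{y}_{NN}$, you turn the high-probability bound into a genuine MSE bound via $(\hat{y}_{NN}-y)^2 \le 1$ plus the failure probability, and you use dominated convergence to deal with the $f$-dependent $\epsilon_0$.
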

We provide the proof of Theorem \ref{thm:query-efficiency} in its entirety.
\begin{proof}[Proof.]
Fix $ Q \subseteq Q^{*} $. 
Let $f$ be a target model with true perspective $\psi$ and estimated perspective $\widehat{\psi}$, and let $f^* \in \arg\min_{i} \|\widehat{\psi}_i - \widehat{\psi}\|_2$ denote one if its nearest neighbors with perspectives $\psi^*$ and $\widehat{\psi}^*$. The prediction error is $|\hat{y}_{NN} - y| = |y(f^*, Q^*) - y(f, Q^*)|$.

By Assumption \ref{ass:lipschitz-score}, we have $|y(f^*, Q^*) - y(f, Q^*)| \leq \gamma \cdot \|\psi^* - \psi\|_2$. 
By the triangle inequality:
\begin{equation*}
\|\psi^* - \psi\|_2 \leq \|\psi^* - \widehat{\psi}^*\|_2 + \|\widehat{\psi}^* - \widehat{\psi}\|_2 + \|\widehat{\psi} - \psi\|_2.
\end{equation*}
Let $\delta^* = \|\widehat{\psi}^* - \widehat{\psi}\|_2$ and let $c = c(n,m,r,d)$ be the concentration bound from Theorem \ref{thm:concentration}. With high probability, $\max_i \|\widehat{\psi}_i - \psi_i\|_2 \leq c$, so $\|\psi^* - \psi\|_2 \leq 2c + \delta^*$. Thus $|y^* - y| \leq \gamma(2c + \delta^*)$.

By Theorem \ref{thm:concentration}, for $r = \omega(n^3)$ and sufficiently large $n $ and $ r$, the constant $c \leq \text{Poly}_3(({n^3}/{r})^{{1}/{2}-\delta})$ can be made arbitrarily small. 

By Assumption \ref{ass:model-support}, for any $\epsilon' > 0$, there exists $n_0$ such that for $n > n_0$, $P_f(\min_i \|\psi_i - \psi\|_2 < \epsilon') \geq 1 - \eta$ for arbitrarily small $\eta > 0$. Combined with the concentration result, $\delta^* < \epsilon' + 2c$ with high probability.

Therefore, with high probability:
\begin{equation*}
|y^* - y| \leq \gamma(2c + \delta^*) < \gamma(4c + \epsilon').
\end{equation*}
For any $\epsilon > 0$, choose $c < \epsilon^{1/2}/(8\gamma)$ and $\epsilon' = \epsilon^{1/2}/(2\gamma)$ with $ n $ and $ r $ sufficiently large.
Then with high probability, $|y^* - y| < \gamma \cdot \epsilon^{1/2}/\gamma = \epsilon^{1/2}$, so $\mathrm{MSE}(\hat{y}_{NN}) = \mathbb{E}_{P_f}[(y^* - y)^2] \leq \epsilon$ with high probability.

Thus, if $\mathrm{MSE}(\hat{y}_{Q}) > 0$ then there exists $N$ such that for $n > N$, $\mathrm{MSE}(\hat{y}_{NN}) < \mathrm{MSE}(\hat{y}_{Q})$, establishing query-efficiency.
\end{proof}

Theorem \ref{thm:query-efficiency} states that benchmark score prediction using the geometry induced by DKPS using a sufficiently large number of models and cached responses to a subset of benchmark queries is better than using a model’s score on the subset of queries.
We note that Theorem~\ref{thm:concentration} requires $r = \omega(n^3)$, to ensure the concentration of estimated perspectives around their population
counterparts. 
However, when models are evaluated at temperature~0 -- as is the case for a lot of evaluation settings -- each model's response to a given query is deterministic, so $\mathbb{E}[g(f(q))] = g(f(q))$ and $r = 1$ suffices for exact recovery of the mean embedded response. 
The theoretical result is therefore strictly more general than the empirical settings below.
\label{method}

\section{Empirical query-efficiency in the DKPS}
\label{sec:query-efficiency}
We next provide empirical evidence that DKPS-based prediction methods are query-efficient.
\subsection{Evaluation set up}
\label{sec:data-description}
\paragraph{Benchmarks.} 
We evaluate DKPS-based prediction methods on tasks from the HELM-Lite benchmark suite \citep{liang2022holistic}.
Specifically, we consider four tasks: MATH \citep{hendrycksmath2021}, which includes 7 subject-based subtasks; LegalBench \citep{guha2023legalbenchcollaborativelybuiltbenchmark}, which includes 5 legal reasoning subtasks; MedQA \citep{jin2020disease}, a multiple-choice medical exam dataset; and WMT-14 \citep{bojar-etal-2014-findings}, which includes 5 language-pair translation subtasks.
Each task employs a different response-level scoring function, $ s: Q^{*} \to [0,1] $: correctness for MATH, quasi exact match for LegalBench and MedQA, and BLEU score for WMT-14.

For our purposes, the score for a given subtask is the average response-level score across all queries in that task.
The score for a given task is the average subtask score.
Full evaluation requires 437 responses for MATH, 2047 responses for LegalBench, 1000 responses for MedQA, 678 responses for WMT-14.'

\paragraph{Models.}
We evaluate models from the HELM-Lite leaderboard that have been scored on each respective task.
Because models are evaluated asynchronously and the benchmark evolves over time, different tasks have different sets of evaluated models.
Critically, our method requires that all models be evaluated on the same set of queries (to construct the distance matrix $D$), so we restrict our analysis to maximal subsets of models sharing a common query set within each subtask and task.
The task with the fewest evaluated models is LegalBench (93 models).
The median and max number of models evaluated on a given task is 95.
In total, 95 unique models appear across all tasks.
Table~\ref{tab:model-families} in the Appendix lists all models included in at least one evaluation, and Table~\ref{tab:missing-models} in the Appendix indicates which models were not evaluated on which tasks.

To ensure our evaluation reflects genuine predictive performance, we adopt a ``Leave-One-Family-Out" (LOFO) evaluation protocol.
Models are grouped into families based on their base architecture and training procedure (e.g., all Llama variants form one family; see Table~\ref{tab:model-families} for complete groupings).
For each evaluation run, we:
\begin{enumerate}[label=(\roman*), itemsep=0pt, topsep=2pt, parsep=0pt]
    \item Select a held-out model family as the prediction target
    \item Sample $n$ reference models from the remaining families
    \item Sample a query subset $Q$ of size $m$ from the benchmark
    \item Construct $d$-dimensional DKPS representations for the reference models and the held-out model family using their responses to $Q$
    \item Train a decision function $\hat{h}_n$ on $\{(\widehat{\psi}_i, y_i)\}$ for the $n$ reference models to make prediction $\hat{y} = \hat{h}_{n}^{(m)}(\widehat{\psi})$
    \item Predict scores for all models in the held-out family
\end{enumerate}
We repeat this protocol $ 1024 $ times for each combination of $(n, m)$ and report the mean absolute error (MAE) averaged across all held-out models.
The LOFO protocol ensures that predictions do not rely on model family artifacts. As such, our results provide a conservative estimate of real-world query-efficiency.

\paragraph{Embedding function \& DKPS configuration.}
We map model responses to vector representations using task-appropriate embedding functions.
Unless stated otherwise, for free-form text responses (MATH, WMT-14) we use the sentence embedding model \texttt{gemini-embedding-001}, which produces one $3024$-dimensional vector per response.
For multiple-choice responses (MedQA, LegalBench), we use one-hot encodings in $\mathbb{R}^{p}$, where $p$ is the number of answer choices.

While the DKPS framework supports multiple responses per query, HELM-Lite requires only a single response per model-query pair, so $r=1$ throughout.
We assume the pairwise distance matrix $ D $ is a Euclidean distance matrix and hence solve Eq. \eqref{eq:dkps} with GraSPy's \citep{JMLR:v20:19-490} implementation of classical multi-dimensional scaling \citep{torgerson1952multidimensional}.
We fix $ d=8 $ for consistency across experiments, though adaptive methods such as selecting $d$ based on the elbows in the scree plot \citep{zhu2006automatic} for each task or subtask may yield further improvements.
\begin{figure*}[t]
    \centering
    \includegraphics[width=\linewidth]{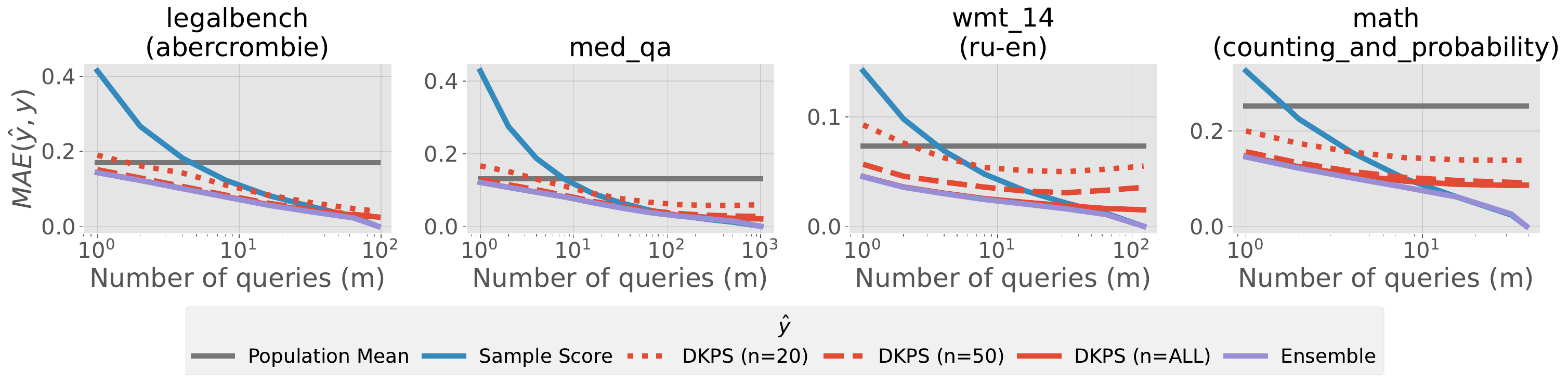}
    \caption{Regression in the Data Kernel Perspective Space (DKPS) provides query-efficient benchmark prediction relative to using the sample score across the representative HELM-Lite subtasks. 
    Lines represent the average mean absolute error across leave-one-family-out and 512 randomly sampled query sets.
    Lower is better.
    Actual query-efficiency depends on the number of models used to induce DKPS and train the regression function, as well as the task.
    The ensemble regressor dominates for nearly all number of queries and all tasks.
    }
\label{fig:dkps-better-on-average}
\end{figure*}

\paragraph{Prediction methods.}
We compare four prediction methods: two baselines that do not use DKPS representations, and two DKPS-based approaches.
\begin{itemize}[itemsep=1pt, topsep=3pt]
\item \textbf{Population Mean}: Predicts the benchmark score as the average score of the $n$ reference models: $\hat{y} = \frac{1}{n}\sum_{i=1}^n y_i$. 
This baseline ignores both the target model's responses and the query subset.
\item \textbf{Sample Score}: Predicts the benchmark score using the target model's average response-level score on the query subset $Q$: $\hat{y}_{sample} = \frac{1}{m}\sum_{q \in Q} s(f(q))$. When $m = M$, this recovers the true benchmark score. 
This baseline uses the target model's responses but ignores information from reference models.
\item \textbf{DKPS}: Trains a linear regressor on the DKPS representations of the $n$ reference models to predict benchmark scores: $\hat{y}_{DKPS} = \beta_0 + \beta_{1}^\top \widehat{\psi}$, where $(\beta_0, \beta_{1})$ are learned via ordinary least squares on $\{(\widehat{\psi}_i, y_i)\}_{i=1}^n$. 
This method leverages cross-model structure but does not directly use response-level scores.
We sometimes evaluate various choices of $ n $.
\item \textbf{Ensemble}: Convex combination of Sample Score and DKPS predictions: 
$\hat{y}_{\text{ensemble}} = \alpha \cdot \hat{y}_{\text{sample}} + (1-\alpha) \cdot\hat{y}_{\text{DKPS}}.$
In our experiments, we set $ \alpha = m / M $.
This weighting reflects our confidence in the sample-based estimate: when $m$ is small, we rely more on DKPS; when $m \approx M$, we rely more on the direct sample score.
\end{itemize}
All predictions are clipped to $[0,1]$.
We use ordinary least squares (OLS) for the DKPS regressor due to its strong performance. 
Empirically, $k$-NN in DKPS exhibits the same qualitative trends but OLS achieves lower MAE across all tasks and query budgets (see Appendix~\ref{app:knn}).

\subsection{Results}
\paragraph{DKPS-based methods outperform Sample Score at low query budgets.}
Figure \ref{fig:dkps-better-on-average} shows the MAE for each of the prediction methods for the representative subtasks.
For $ m $ small, DKPS-based methods outperform Sample Score acrosss all subtasks and all number of reference models.
Importantly, for $ n = \text{ALL} $, the earliest intersection between the performance of $ y_{DKPS} $ and $ y_{SS} $ is $ m \approx 10 $ (for MATH's counting and probability).
For the other three subtasks, however, the region in which $ y_{DKPS} $ outperforms $ y_{NN} $ is even more substantial -- the performances of the two methods intersect beyond $ m = 30 $.

The more reference models that are included, the better the DKPS-based methods perform.
The effect of adding more reference models depends on the task.
As hinted by Assumption \ref{ass:model-support}, it is possible that including more reference models would meaningfully shift the point where the performances intersect.
We investigate the effect of a particular reference collection at $ n = 20 $ in Appendix \ref{app:worstcase}.

Table~\ref{tab:dkps} shows performance of the methods as a function of query budget $ m $ for the full tasks from HELM-Lite.
We observe similar relative performance at the task level as we do at the subtask level.

\paragraph{Ensemble method dominates across all query budgets.}
While DKPS and Sample Score each excel in different number of query regimes, the Ensemble method achieves the best performance across nearly all $(m, \text{task})$ combinations, often outperforming both components and the Population Mean baseline.
The ensemble's adaptive weighting ($ \alpha = \frac{m}{M}$) is effective: at small $m$, it inherits DKPS's ability to exploit cross-model structure; at large $m$, it transitions to rely on the target model's actual response-level scores.
This result has important implications.
In particular, practitioners need not choose between methods.
In some cases where cross-task structure can be assumed, more adaptive weightings may provide additional efficiency \citep{math12050746} -- though the proposed weighting appears sufficient for out-of-the-box use.

\hspace{-3cm}
\paragraph{Embedding function choice can have non-negligible effect.}
The preceding results used \texttt{gemini-embedding-001} to map responses to vectors. 
Figure~\ref{fig:embedding_comparison} examines whether this choice matters by comparing six embedding functions on the Math (counting and probability) subtask. 
At small $m$, embedding choice has a meaningful effect: the best-performing embedding (\texttt{gemini-embedding-001}) achieves roughly 20\% lower MAE than the worst (\texttt{all-minilm-l6-v2}) at $m=1$. 
This gap narrows as query budget increases, and by $m \geq 30$ all embeddings perform comparably. 

Notably, embedding capacity does not predict performance.
Larger models do not uniformly outperform smaller ones, suggesting that alignment between the embedding space and task structure may matter more than raw dimensionality. 
For practitioners, we recommend selecting an embedding function carefully when operating at low query budgets: a 2-3\% reduction in MAE can translate to significant cumulative savings when evaluating new model variants.
We investigate the effect of other pipeline hyperparameters such as the MDS embedding dimension $ d $ and the ensemble weight $ \alpha $ in Appendix \ref{app:sensitivity}.
\begin{figure}[t]
    \centering
    \includegraphics[width=0.9\linewidth]{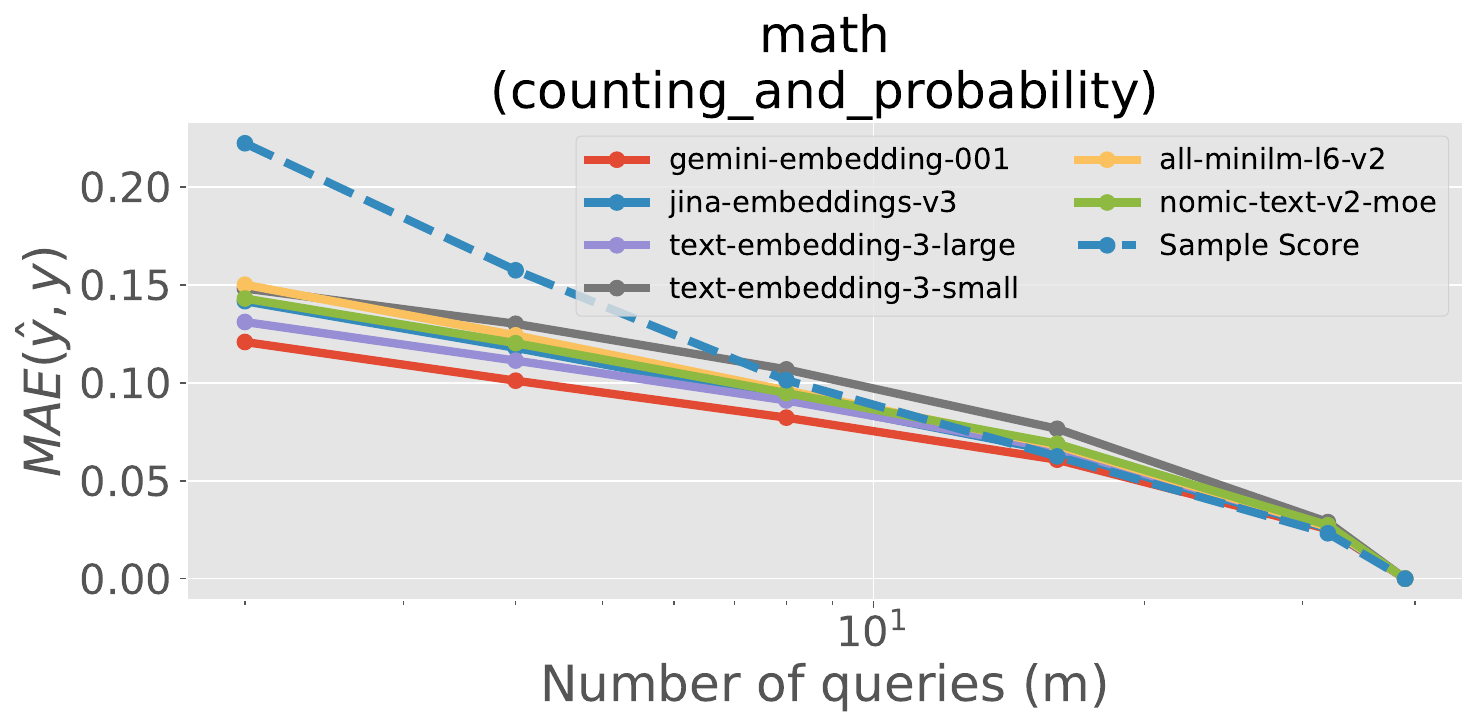}
    \caption{Choice of embedding function can have a large effect at small $ m $. For small $ m $, the best performing embedding model (\texttt{gemini-embedding-001}) improves upon the worst performing (\texttt{all-minilm-l6-v2}) by $ \approx 20\% $ (from  MAE $\approx 0.15 $ to MAE $\approx 0.12 $) at $ m = 1 $. 
    For large enough $ m $, any modern sentence embedding function is sufficient.}
\label{fig:embedding_comparison}
\end{figure}
\begin{figure*}
    \centering
    \includegraphics[width=0.9\linewidth]{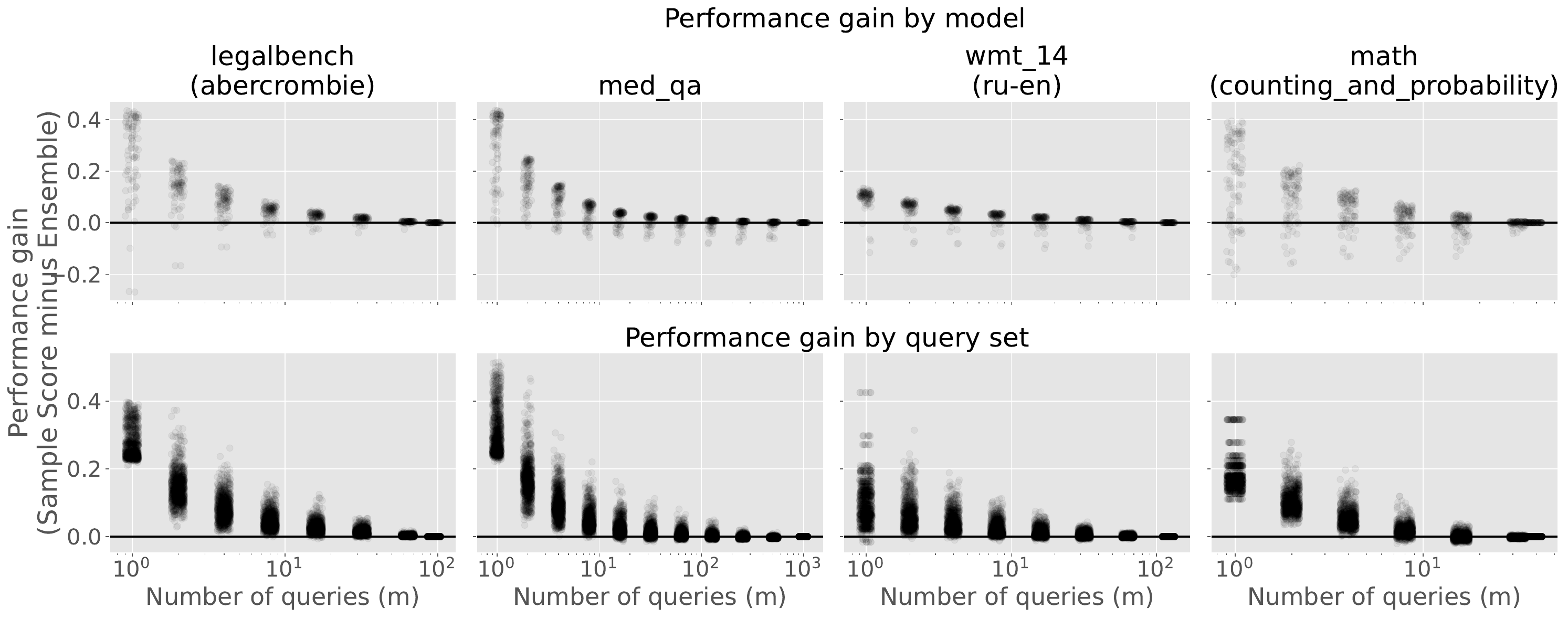}
    \caption{Performance gain (MAE of Sample Score minus MAE of Ensemble regressor) on a per model basis (top) and a per query set basis (bottom) for the four representative subtasks. 
    Each dot represents the average difference in performance across query sets (top) or across models (bottom). A
    A dot above 0 indicates that the Ensemble regressor is better than just using Sample Score.
    The majority of the mass of the distribution of the difference is above 0 for all number of queries and all tasks.}
\label{fig:dkps-better-by-model}
\end{figure*}

\paragraph{Model-level performance reveals broad applicability}
The top row of Figure~\ref{fig:dkps-better-by-model} shows the distribution of performance gain from using Ensemble (relative to Sample Score) at the individual model level.
A dot above $ 0 $ indicates that using the ensemble improve benchmark estimation for that model.
As can be seen in the figure, the majority of the mass of each distribution is above $ 0 $ for all $ m $ and all subtasks.
At low query budgets, the distributions are heavily concentrated above zero across all tasks.
This observation indicates that combining DKPS-based prediction with sample scores benefits diverse model types, even with LOFO protocol.

However, the variance in these benefits differs substantially across tasks.
WMT exhibits remarkably low spread, with models clustering tightly around the mean, suggesting the Ensemble's adaptive weighting provides uniform improvements for this task.
In contrast, MedQA and Math the performance differences have wider distributions -- some models gain substantially from the Ensemble while others show minimal effects. 
We leave deeper investigations into model-specific benefits, such as predicting the suitability of DKPS-based methods for a particular model, to future work. 

\paragraph{Query set matters.}
The bottom row of Figure~\ref{fig:dkps-better-by-model} shows performance variation across different random query subsets.
Each point represents the mean performance difference (Sample Score MAE minus Ensemble MAE) for a single randomly-sampled query set of size $m$, averaged across models.
A point above $0$ indicates that the Ensemble outperforms Sample Score for that query set.
As can be seen in the figure, at small query budgets ($m \leq 10$), the distributions span a wide range: some query sets yield substantial Ensemble improvements while others produce negligible or negative gains.
This variance arises because the quality of the DKPS representations $\widehat{\psi}$ depends on which queries are used to construct the distance matrix $D$ -- with few queries, unlucky selections can produce uninformative representations.
As query budget increases, the variance contracts, but poor query selection at low $m$ is precisely where DKPS-based methods would otherwise provide the largest savings.
This motivates developing query selection strategies that move beyond uniform random sampling.
\paragraph{Active query set selection protects against bad query sets.}
We propose a simple offline selection strategy that leverages cached responses from reference models.
Given a query budget $m$:
\begin{enumerate}[label=(\roman*), itemsep=0pt, topsep=2pt, parsep=0pt]
    \item Sample $ B $ candidate query sets of size $m$ uniformly at random
    \item For each candidate, construct DKPS representations of the reference models and fit a linear regressor to predict their known benchmark scores
    \item Compute the goodness-of-fit ($R^2$) between predicted and actual scores
    \item Select the query set that maximizes $R^2$
\end{enumerate}
If a query set produces DKPS representations that predict benchmark scores for reference models, it likely captures task-relevant model structure and will generalize to new models.
This process operates entirely on cached responses and can be performed once and reused for all future evaluations at budget $m$.
\begin{figure*}
    \centering
    \includegraphics[width=0.9\linewidth]{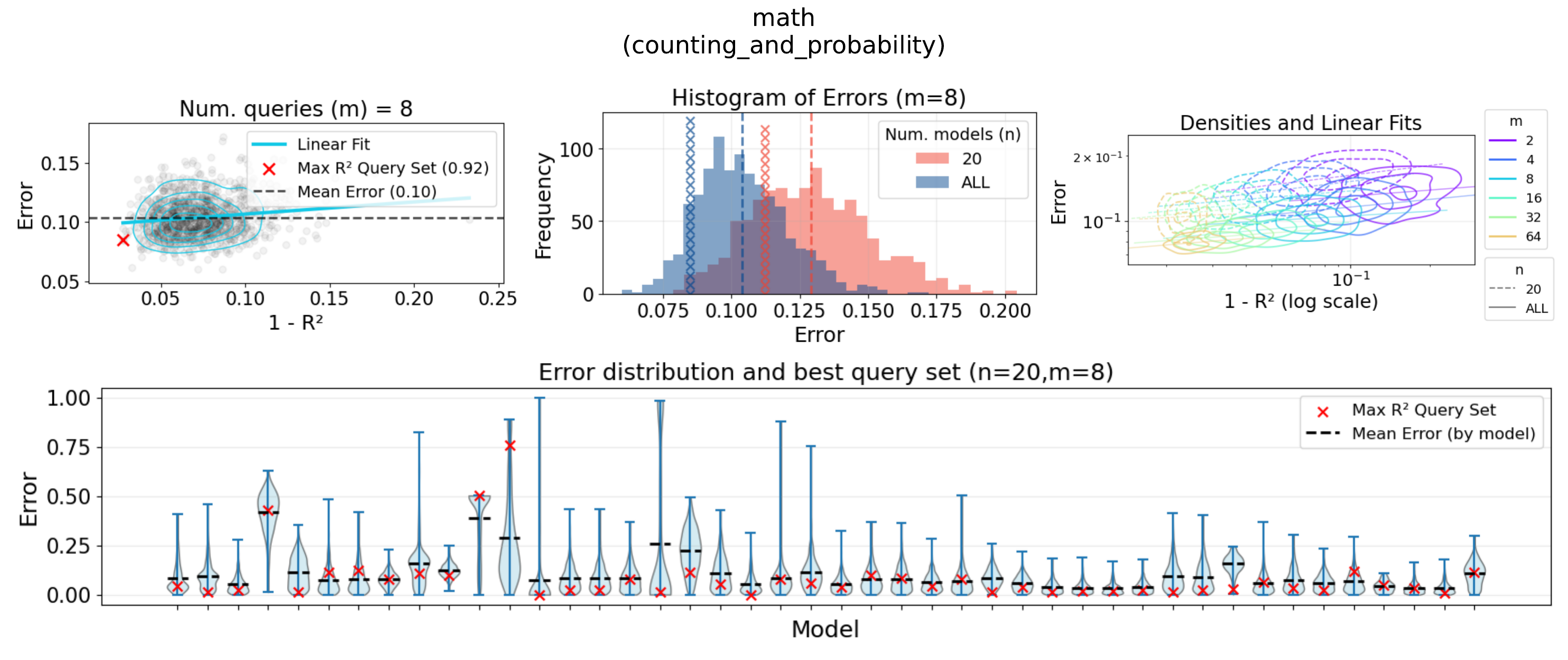}
    \caption{Active query selection can improve query-efficiency of DKPS-based prediction methods.
    \textbf{Top left.} Relationship between MAE and linear goodness-of-fit ($ R^{2} $) between DKPS representations of reference models and full benchmark score for $ m = 8 $ queries on the MATH counting and probability subtask. The highest $ R^{2} $ (lowest $1-R^{2}$ is highlighted with a red $ \times $.
    \textbf{Top center.} Histogram of MAE for different query subsets. Color indicates number of reference models. Using the query set that induces the DKPS representations that maximizes $ R^{2} $ has a lower  better than the average query set for both $ n = 20 $ and $ n = ALL $.
    \textbf{Top right.} MAE versus $ 1 - R^{2} $ densities for different $ (n, m) $ pairs.
    \textbf{Bottom.} MAE distribution across query sets for various models. If the red $ \times $ is lower than the dotted line, the query set that maximized $ R^{2} $ is preferred over the average query set.}
    \label{fig:query-selection}
\end{figure*}

Figure~\ref{fig:query-selection} demonstrates the effectiveness of this approach on the MATH counting and probability subtask with $m = 8$ queries.
The top-left panel shows that $R^2$ on reference models correlates with prediction error on held-out models, validating $R^2$ as a selection criterion.
The max-$R^2$ query set (red $\times$) achieves lower error than the mean random query set.
As can be seen in the top-center and bottom panels, this improvement is broad: the selected query set lies in the favorable tail of the error distribution and provides lower error across nearly all individual models.
The top-right panel shows that the benefit of active selection diminishes as query budget increases—at larger $m$, random selection suffices.
This is expected: active selection matters most where query set variance most limits DKPS effectiveness.

More broadly, and as seen in Table \ref{tab:dkps} via comparison to IRT \citep{polo2024tinybenchmarks}, DKPS-based prediction is complementary to other efficient benchmarking or query selection methods.
Our $R^2$-based approach is one such method, but DKPS can equally operate on queries selected via other principled or brute force selection strategies—potentially compounding efficiency gains beyond what either approach achieves alone.

\section{Discussion}
We proposed an approach to benchmark score prediction that leverages cached responses from previously-evaluated models via the Data Kernel Perspective Space.
We established formal query-efficiency guarantees, proving that nearest neighbor regression in DKPS outperforms predicting benchmark scores using the model's score on a subset of queries under certain technical assumptions.

These theoretical predictions were validated empirically across diverse HELM-Lite tasks spanning mathematical reasoning, legal analysis, medical question answering, and machine translation.
The Ensemble method -- which adaptively combines DKPS-based prediction with direct sample scores -- consistently achieved the best performance.
We further demonstrated that offline query selection strategies can provide additional improvements to query efficiency.
Overall, our results demonstrate that information encoded in cached responses enables more efficient performance prediction—an important capability as evaluation continues to scale in cost and complexity.

\subsection*{Limitations and future work}
\noindent\textbf{Stochastic scoring functions.}
Our theoretical analysis assumes the mapping from a response to its score is deterministic.
However, many modern evaluation frameworks use LLM-as-judge scoring \citep{zheng2023judging}, where scores are stochastic and may depend on contextual factors beyond individual responses.
If each score from an LLM-as-a-judge takes the form
$s(f(q)) + \epsilon$ with $\epsilon$ i.i.d., then repeatedly querying and
scoring recovers the current setting in the limit and extending our framework to this setting is conceptually and mechanistically straightforward.

\noindent\textbf{Evaluation without response-level scores.}
A subtle but important feature of DKPS-based methods is that they remain valid even when response-level scoring is unavailable or expensive.
Traditional subset scoring methods require scoring each response to identify informative queries, but DKPS constructs representations purely from response embeddings without invoking the scoring function.
This enables prediction in scenarios where: (1) scoring is proprietary or access-controlled (e.g., human evaluation), (2) scoring is computationally expensive (e.g., running code, simulation), or (3) scores are only available at the benchmark level, not per-query.

\noindent\textbf{Unstructured query sets and missing data.}
Our analysis assumes all reference models are evaluated on the same query set, enabling direct construction of the distance matrix $D$.
In practice, benchmark data often exhibits partial overlap -- different models may be evaluated on different query subsets due to asynchronous evaluation, budget constraints, or benchmark evolution. 
Several approaches could extend DKPS to this setting.
For example, restricting to models sharing a common query set (our current approach), using matrix completion methods \citep{candes2008exactmatrixcompletionconvex} to estimate distances from incomplete data, or aligning separate DKPS representations via models evaluated on multiple query sets \citep{priebe2011manifoldmatchingjointoptimization}.

\paragraph{Prediction bounds for frontier models.}
Nearest-neighbor prediction is inherently bounded by the scores of the reference
models, which may limit accuracy when the target model substantially outperforms
all references. In practice, this constraint is less restrictive than it may
appear: most practitioners are evaluating non-frontier models or verifying that
modifications to a model do not degrade performance, rather than operating at the
boundary of benchmark performance. Extending the theoretical and empirical
analysis to support inductive (extrapolative) prediction would require stronger
assumptions on the relationships between the task, the model distribution, and
the embedding function.

\noindent\textbf{Query efficiency in practice.}
Our empirical results revealed efficiency heterogeneity across task.
Understanding which task properties (e.g., response diversity, scoring function complexity, query difficulty distribution, alignment between embeddings and evaluation metrics) predict when DKPS-based methods excel would enable practitioners to assess applicability before deployment.

In terms of practical deployment, our results suggest a simple workflow: maintain cached responses from evaluated models, use the Ensemble method with offline query selection, and allocate approximately 10\% of the full query budget for new model evaluation.
Notably, our Leave-One-Family-Out evaluation protocol provides conservative estimates of real-world performance since in practice reference sets may include models from the same family as the target, yielding stronger cross-model signal and further improving query efficiency.
As model zoos and benchmark suites continue to grow, integrating DKPS-based prediction into evaluation pipelines will yield substantial cumulative savings.
\label{experiments}

\section*{Impact Statement}
The methods proposed and study in this work reduce the computational cost of benchmark evaluation by up to 10$\times$ by leveraging cached responses from previously-evaluated models.
Lower evaluation costs democratize access to comprehensive model assessment, enabling researchers and practitioners with limited budgets to make informed deployment decisions.
We foresee no negative societal consequences of this work.

\subsection*{Acknowledgments}

We gratefully acknowledge funding from Defense Advanced Research Projects Agency (DARPA) Artificial Intelligence Quantified (AIQ) award number HR00112520026.

\bibliographystyle{icml/icml2026}
\bibliography{biblio}


\onecolumn

\appendix


\section{Experiment details}

\begin{longtable}
{@{}lrp{11cm}@{}}
\caption{List of model families.}
\label{tab:model-families} \\
\toprule
\textbf{Family} & \textbf{Count} & \textbf{Models} \\
\midrule
\endfirsthead

\multicolumn{3}{c}{\textit{(continued from previous page)}} \\
\toprule
\textbf{Family} & \textbf{Count} & \textbf{Models} \\
\midrule
\endhead

\midrule
\multicolumn{3}{r}{\textit{(continued on next page)}} \\
\endfoot

\bottomrule
\endlastfoot

01-ai & 3 & yi-34b, yi-6b, yi-large-preview \\
AlephAlpha & 3 & luminous-base, luminous-extended, luminous-supreme \\
ai21 & 5 & j2-grande, j2-jumbo, jamba-1.5-large, jamba-1.5-mini, jamba-instruct \\
allenai & 1 & olmo-7b \\
amazon & 3 & nova-lite-v1, nova-micro-v1, nova-pro-v1 \\
anthropic & 11 & claude-2.0, claude-2.1, claude-3-5-haiku-20241022, claude-3-5-sonnet-20240620, claude-3-5-sonnet-20241022, claude-3-haiku-20240307, claude-3-opus-20240229, claude-3-sonnet-20240229, claude-instant-1.2, claude-instant-v1, claude-v1.3 \\
cohere & 4 & command, command-light, command-r, command-r-plus \\
databricks & 1 & dbrx-instruct \\
deepseek-ai & 2 & deepseek-llm-67b-chat, deepseek-v3 \\
google & 13 & gemini-1.0-pro-001, gemini-1.0-pro-002, gemini-1.5-flash-001, gemini-1.5-flash-002, gemini-1.5-pro-001, gemini-1.5-pro-002, gemini-1.5-pro-preview-0409, gemini-2.0-flash-exp, gemma-2-27b-it, gemma-2-9b-it, gemma-7b, text-bison@001, text-unicorn@001 \\
meta & 12 & llama-2-13b, llama-2-70b, llama-2-7b, llama-3-70b, llama-3-8b, llama-3.1-405b-instruct-turbo, llama-3.1-70b-instruct-turbo, llama-3.1-8b-instruct-turbo, llama-3.2-11b-vision-instruct-turbo, llama-3.2-90b-vision-instruct-turbo, llama-3.3-70b-instruct-turbo, llama-65b \\
microsoft & 3 & phi-2, phi-3-medium-4k-instruct, phi-3-small-8k-instruct \\
mistralai & 9 & mistral-7b-instruct-v0.3, mistral-7b-v0.1, mistral-large-2402, mistral-large-2407, mistral-medium-2312, mistral-small-2402, mixtral-8x22b, mixtral-8x7b-32kseqlen, open-mistral-nemo-2407 \\
nvidia & 1 & nemotron-4-340b-instruct \\
openai & 9 & gpt-3.5-turbo-0613, gpt-4-0613, gpt-4-1106-preview, gpt-4-turbo-2024-04-09, gpt-4o-2024-05-13, gpt-4o-2024-08-06, gpt-4o-mini-2024-07-18, text-davinci-002, text-davinci-003 \\
qwen & 8 & qwen1.5-110b-chat, qwen1.5-14b, qwen1.5-32b, qwen1.5-72b, qwen1.5-7b, qwen2-72b-instruct, qwen2.5-72b-instruct-turbo, qwen2.5-7b-instruct-turbo \\
snowflake & 1 & snowflake-arctic-instruct \\
tiiuae & 2 & falcon-40b, falcon-7b \\
upstage & 1 & solar-pro-241126 \\
writer & 3 & palmyra-x-004, palmyra-x-v2, palmyra-x-v3 \\

\end{longtable}

\begin{longtable}{@{}p{7cm}rrp{6cm}@{}}
\caption{Missing models per task.}
\label{tab:missing-models}\\
\toprule
\textbf{Dataset Split} & \textbf{Present} & \textbf{Missing} & \textbf{Missing Models (by family)} \\
\midrule
\endfirsthead

\multicolumn{4}{c}{\textit{(continued from previous page)}} \\
\toprule
\textbf{Subtask} & \textbf{Present} & \textbf{Missing} & \textbf{Missing Models (by family)} \\
\midrule
\endhead

\midrule
\multicolumn{4}{r}{\textit{(continued on next page)}} \\
\endfoot

\bottomrule
\endlastfoot

legalbench-subset=abercrombie & 93 & 2 & mistralai: mistral-large-2402, mistral-small-2402 \\
legalbench-subset=corporate\_lobbying & 93 & 2 & mistralai: mistral-large-2402, mistral-small-2402 \\
legalbench-subset=function\_of\_decision\_section & 93 & 2 & mistralai: mistral-large-2402, mistral-small-2402 \\
legalbench-subset=international\_citizenship\_questions & 93 & 2 & mistralai: mistral-large-2402, mistral-small-2402 \\
legalbench-subset=proa & 93 & 2 & mistralai: mistral-large-2402, mistral-small-2402 \\
math-subject=algebra & 95 & 0 & - \\
math-subject=counting\_and\_probability & 95 & 0 & - \\
math-subject=geometry & 95 & 0 & - \\
math-subject=intermediate\_algebra & 95 & 0 & - \\
math-subject=number\_theory & 95 & 0 & - \\
math-subject=prealgebra & 95 & 0 & - \\
math-subject=precalculus & 95 & 0 & - \\
med\_qa & 95 & 0 & - \\
wmt\_14-language\_pair=cs-en & 95 & 0 & - \\
wmt\_14-language\_pair=de-en & 95 & 0 & - \\
wmt\_14-language\_pair=fr-en & 94 & 1 & ai21: jamba-instruct \\
wmt\_14-language\_pair=hi-en & 95 & 0 & - \\
wmt\_14-language\_pair=ru-en & 95 & 0 & - \\

\end{longtable}

\section{Nearest neighbor regression in DKPS}
\label{app:knn}

Our theoretical results (Theorem~\ref{thm:query-efficiency}) establish query-efficiency for
nearest neighbor regression in DKPS. In practice, we use ordinary least squares
(OLS) due to its stronger empirical performance. Here we compare OLS to $1$-NN
and $\sqrt{n}$-NN regression across all four tasks, with $n = \text{ALL}$ and
$d = 8$. Table~\ref{tab:knn} reports MAE for both the DKPS-only and Ensemble
($\alpha = m/M$) variants.

\begin{table}[h]
\caption{MAE of DKPS and Ensemble methods using OLS, $1$-NN, and $\sqrt{n}$-NN
regression. All settings use $n = \text{ALL}$ and $d = 8$. Lower is better.}
\label{tab:knn}
\centering
\small
\begin{tabular}{cl|ccc|ccc}
\toprule
& & \multicolumn{3}{c|}{DKPS} & \multicolumn{3}{c}{Ensemble} \\
Task & $m$ & OLS & $1$-NN & $\sqrt{n}$-NN & OLS & $1$-NN & $\sqrt{n}$-NN \\
\midrule
\multirow{3}{*}{LegalBench}
& 1  & 0.102 & 0.130 & 0.104 & 0.102 & 0.130 & 0.104 \\
& 4  & 0.076 & 0.100 & 0.086 & 0.075 & 0.097 & 0.084 \\
& 16 & 0.051 & 0.073 & 0.069 & 0.049 & 0.064 & 0.061 \\
\midrule
\multirow{3}{*}{MedQA}
& 1  & 0.120 & 0.159 & 0.122 & 0.120 & 0.159 & 0.122 \\
& 4  & 0.094 & 0.120 & 0.103 & 0.094 & 0.118 & 0.102 \\
& 16 & 0.066 & 0.084 & 0.075 & 0.064 & 0.078 & 0.070 \\
\midrule
\multirow{3}{*}{WMT-14}
& 1  & 0.037 & 0.042 & 0.038 & 0.036 & 0.041 & 0.038 \\
& 4  & 0.026 & 0.033 & 0.033 & 0.026 & 0.032 & 0.032 \\
& 16 & 0.021 & 0.028 & 0.031 & 0.019 & 0.024 & 0.026 \\
\midrule
\multirow{3}{*}{MATH}
& 1  & 0.148 & 0.178 & 0.154 & 0.147 & 0.175 & 0.152 \\
& 4  & 0.112 & 0.146 & 0.122 & 0.105 & 0.133 & 0.112 \\
& 16 & 0.090 & 0.121 & 0.099 & 0.063 & 0.077 & 0.067 \\
\bottomrule
\end{tabular}
\end{table}

OLS consistently achieves the lowest MAE across all tasks and query budgets.
However, both $k$-NN variants exhibit the same qualitative trends as OLS:
performance improves with increasing $m$ and increasing $n$, and all three
regressors outperform Sample Score at low query budgets. This validates the
theoretical predictions of Theorem~\ref{thm:query-efficiency}, which guarantees
query-efficiency for nearest neighbor regression, while confirming that the
additional structure imposed by OLS is beneficial in the moderate-$n$ regime
studied here.

\section{Sensitivity to reference model collection}
\label{app:worstcase}

The results in the main text use all available reference models ($n = \text{ALL}$).
In practice, the specific set of reference models available may vary. To
characterize how performance depends on the reference model collection, we
evaluate the Ensemble method ($\alpha = m/M$, $d = 8$) with $n = 20$ reference
models randomly sampled from the available pool. For each target model and
subtask, we draw 10 random reference collections and report summary statistics of
the resulting MAE, averaged across subtasks within each task.

\begin{table}[h]
\caption{Distribution of Ensemble MAE over random reference model collections
with $n = 20$. Statistics are averaged across subtasks and target models within
each task.}
\label{tab:worstcase}
\centering
\small
\begin{tabular}{l|ccccc}
\toprule
Task & Min & Q25 & Median & Q75 & Max \\
\midrule
LegalBench & 0.008 & 0.054 & 0.088 & 0.134 & 0.404 \\
MedQA      & 0.007 & 0.069 & 0.113 & 0.168 & 0.397 \\
WMT-14     & 0.002 & 0.024 & 0.043 & 0.071 & 0.147 \\
MATH       & 0.009 & 0.078 & 0.127 & 0.201 & 0.427 \\
\bottomrule
\end{tabular}
\end{table}

With only $n = 20$ reference models, performance depends substantially on which
models are included. The interquartile range spans roughly $0.05$--$0.13$ in MAE
depending on the task, and worst-case performance can exceed $0.4$. WMT-14
exhibits the tightest distribution, suggesting that translation models are more
uniformly informative as references. This underscores the importance of
maintaining a large and diverse reference model pool: as shown in
Figure~\ref{fig:dkps-better-on-average}, increasing $n$ from 20 to ALL substantially reduces both
the mean error and its variance.

\section{Sensitivity to DKPS dimension $d$ and interpolation weight $\alpha$}
\label{app:sensitivity}

We study the sensitivity of the Ensemble method to two key hyperparameters: the
DKPS embedding dimension $d$ and the interpolation weight $\alpha$ used to
combine DKPS predictions with sample scores. All results use $n = \text{ALL}$.

\paragraph{Effect of $d$.} Table~\ref{tab:dim} reports MAE for the Ensemble
method with $\alpha = m/M$ across $d \in \{1, 2, 4, 8, 16, 32\}$.

\begin{table}[h]
\caption{MAE of the Ensemble method as a function of DKPS dimension $d$, with
$\alpha = m/M$ and $n = \text{ALL}$. Lower is better.}
\label{tab:dim}
\centering
\small
\begin{tabular}{cl|cccccc}
\toprule
Task & $m$ & $d=1$ & $d=2$ & $d=4$ & $d=8$ & $d=16$ & $d=32$ \\
\midrule
\multirow{3}{*}{LegalBench}
& 1  & 0.112 & 0.098 & 0.098 & 0.102 & 0.110 & 0.137 \\
& 4  & 0.100 & 0.085 & 0.076 & 0.075 & 0.082 & 0.099 \\
& 16 & 0.068 & 0.053 & 0.049 & 0.049 & 0.052 & 0.064 \\
\midrule
\multirow{3}{*}{MedQA}
& 1  & 0.120 & 0.116 & 0.115 & 0.120 & 0.131 & 0.162 \\
& 4  & 0.110 & 0.105 & 0.098 & 0.094 & 0.097 & 0.116 \\
& 16 & 0.073 & 0.068 & 0.066 & 0.064 & 0.063 & 0.071 \\
\midrule
\multirow{3}{*}{WMT-14}
& 1  & 0.038 & 0.036 & 0.035 & 0.036 & 0.047 & 0.072 \\
& 4  & 0.025 & 0.024 & 0.024 & 0.026 & 0.032 & 0.044 \\
& 16 & 0.017 & 0.016 & 0.017 & 0.019 & 0.025 & 0.030 \\
\midrule
\multirow{3}{*}{MATH}
& 1  & 0.185 & 0.166 & 0.153 & 0.147 & 0.151 & 0.177 \\
& 4  & 0.135 & 0.120 & 0.107 & 0.105 & 0.108 & 0.122 \\
& 16 & 0.081 & 0.076 & 0.064 & 0.063 & 0.064 & 0.069 \\
\bottomrule
\end{tabular}
\end{table}

Moderately sized $d$ (e.g., $d \in \{4, 8\}$) appears sufficient for
high-quality prediction across all tasks and query budgets. Performance degrades
when $d$ is too small (insufficient expressiveness) or too large (overfitting
relative to the number of reference models). The optimal $d$ varies slightly by
task --- for example, LegalBench and WMT-14 favor smaller $d$, while MATH
benefits from $d \geq 4$ --- but $d = 8$ provides a robust default.

\paragraph{Effect of $\alpha$.} Table~\ref{tab:alpha} reports MAE for the
Ensemble method with $d = 8$ across $\alpha \in \{m/M, 0, 0.1, 0.5, 0.8, 1\}$.
Note that $\alpha = 0$ corresponds to DKPS-only and $\alpha = 1$ corresponds
to Sample Score.

\begin{table}[h]
\caption{MAE of the Ensemble method as a function of interpolation weight
$\alpha$, with $d = 8$ and $n = \text{ALL}$. Lower is better.}
\label{tab:alpha}
\centering
\small
\begin{tabular}{cl|cccccc}
\toprule
Task & $m$ & $m/M$ & $0$ & $0.1$ & $0.5$ & $0.8$ & $1$ \\
\midrule
\multirow{3}{*}{LegalBench}
& 2  & 0.090 & 0.091 & 0.091 & 0.140 & 0.198 & 0.242 \\
& 4  & 0.076 & 0.076 & 0.075 & 0.103 & 0.140 & 0.168 \\
& 16 & 0.050 & 0.051 & 0.050 & 0.056 & 0.069 & 0.080 \\
\midrule
\multirow{3}{*}{MedQA}
& 2  & 0.107 & 0.107 & 0.109 & 0.158 & 0.227 & 0.275 \\
& 4  & 0.094 & 0.094 & 0.094 & 0.119 & 0.158 & 0.187 \\
& 16 & 0.065 & 0.065 & 0.064 & 0.067 & 0.079 & 0.090 \\
\midrule
\multirow{3}{*}{WMT-14}
& 2  & 0.031 & 0.031 & 0.031 & 0.055 & 0.079 & 0.097 \\
& 4  & 0.026 & 0.027 & 0.026 & 0.040 & 0.056 & 0.068 \\
& 16 & 0.019 & 0.021 & 0.020 & 0.022 & 0.028 & 0.033 \\
\midrule
\multirow{3}{*}{MATH}
& 2  & 0.126 & 0.128 & 0.123 & 0.139 & 0.183 & 0.224 \\
& 4  & 0.106 & 0.111 & 0.105 & 0.104 & 0.128 & 0.154 \\
& 16 & 0.068 & 0.090 & 0.082 & 0.060 & 0.057 & 0.065 \\
\bottomrule
\end{tabular}
\end{table}

While $\alpha = m/M$ is not the best choice for every individual
$(m, \text{task})$ combination, it is near-best across all settings and provides
a robust default that does not require tuning. At small $m$, the Ensemble
appropriately down-weights the noisy sample score; at large $m$, it transitions
smoothly toward the direct estimate. In the case of MATH at $m = 16$, larger
$\alpha$ values ($0.5$--$0.8$) slightly outperform $m/M$, reflecting the
relatively high quality of direct score estimates on this task at moderate query
budgets.

\section{IRT baseline implementation}
\label{app:irt}

We compare against Item Response Theory (IRT) following
\citet{polo2024tinybenchmarks}. We use the 1-parameter logistic (Rasch) model,
which models the probability of model $i$ answering query $j$ correctly as
\[
P(x_{ij} = 1 \mid \theta_i, b_j)
  = \frac{1}{1 + \exp(-(\theta_i - b_j))}
\]
where $\theta_i$ is the ability parameter of model $i$ and $b_j$ is the
difficulty parameter of query $j$. The difficulty parameters $\{b_j\}$ are
estimated offline from the full set of reference model responses. Given a target
model's responses to a subset $Q$ of queries, the ability parameter $\theta$ is
estimated via maximum likelihood, and the predicted benchmark score is derived
from the estimated ability.

We use the default parameters provided by the publicly available implementation
accompanying \citet{polo2024tinybenchmarks}. All IRT results follow the same
Leave-One-Family-Out protocol and use the same random query subsets as the
other methods.

We note that IRT performs well on LegalBench, MedQA, and MATH, where it is
competitive with or outperforms Sample Score at all query budgets
(Table~\ref{tab:dkps}). However, IRT performs substantially worse than all other
methods on WMT-14. DKPS-based methods, by contrast, perform well across all tasks.

\paragraph{DKPS+IRT.} We also evaluate a combined method that appends the
estimated IRT ability parameter $\hat{\theta}$ as an additional feature to the
DKPS perspective vector before fitting OLS. That is, the regressor operates on
the $(d+1)$-dimensional vector $[\hat{\psi}^\top, \hat{\theta}]^\top$. This
gives the regressor access to both behavioral similarity (via DKPS) and a direct
performance signal (via IRT). Ens(DKPS+IRT) then forms a convex combination of the Sample Score with the
DKPS+IRT prediction: $\hat{y} = \alpha \cdot \hat{y}_{\text{sample}} +
(1 - \alpha) \cdot \hat{y}_{\text{DKPS+IRT}}$, with $\alpha = m/M$. As shown in
Table~\ref{tab:dkps}, Ens(DKPS+IRT) achieves the best or near-best performance
in 14 of 16 (task, $m$) settings.

\end{document}